\documentclass[runningheads]{llncs}
\usepackage[T1]{fontenc}
\usepackage{graphicx}

\def\hdl{{\tt CDPRL}}

\def\chdr{{\tt CDPR}}
\def\cdp{CDPR}

\def\ids{IDS}
\def\shdl{{\tt CDPRL-MO}}
\def\ruleset{rule-set}
\def\rulesets{rule-sets}

\usepackage{comment}
\usepackage{algorithm}
\usepackage[noend]{algorithmic}     
\usepackage{amsmath}

\usepackage{subfig}
\usepackage{amsfonts}
\begin{document}
\title{Learning High Coverage Discriminative Parsimonious Rulesets  }
%
%
\author{Mariamma Antony\inst{1} \and
Raman Sankaran \inst{2} \and
Chiranjib Bhattacharyya\inst{1} \and
Uma Satya Ranjan\inst{1}}
\authorrunning{M. Author et al.}
%
\institute{Indian Institute of Science, CV Raman Rd, Bengaluru, Karnataka 560012 \and
Compass, Bengaluru, Karnataka 560012
}
\maketitle              
\begin{abstract}
Learning systems based on IF-THEN rule representations readily offer interpretability, making them a crucial focus in contemporary AI research. A key objective for such rule sets is to achieve both high discriminative power and interpretability. While existing state-of-the-art algorithms implicitly prioritize predictive accuracy, they often fall short on one or more quality metrics that ensure interpretability, such as coverage and parsimony of rule sets.
Motivated by this, this paper propose the development of CDPR, which aims to create highly accurate and interpretable rule sets for classification problems. 
To the best of our knowledge, this represents the first attempt to establish such an approach. 
In this study, we introduce two algorithms rooted in sub-modular maximization, which not only provide provable guarantees on coverage but also yield rule sets that are both discriminative and parsimonious. We empirically demonstrate that rule sets learned through our approaches achieve higher accuracy and interpretability and has more than a 2.5-fold improvement in average coverage rates when compared to the next best algorithm.
\keywords{First keyword  \and Second keyword \and Another keyword.}
\end{abstract}

\section{INTRODUCTION}
There is currently a growing demand for the development of interpretable models in critical domains, such as healthcare~\cite{pmlr-v54-lakkaraju17a}, criminal justice~\cite{bail},  finance~\cite{2018arXiv181105245M}. This demand has been further fueled by the Right to Explanation as outlined in the GDPR Law, which was approved by the European Parliament. This legislation grants individuals the right to receive an explanation for the automated inferences made by models and allows them to question and challenge associated recommendations, especially when these model decisions significantly impact an individual's life \cite{thelisson2017regulatory,goodman2017european}. Rule-based models, known for their interpretability, offer practitioners and experts the opportunity to examine the logic behind the models and validate or enhance them. While post-hoc methods can provide interpretations for black-box models, they do not always guarantee the fidelity of explanations for such models.

\begin{figure}
    \fbox{
    \begin{minipage}{36em}
IF \texttt{thalassemia = normal} and \texttt{number of major vessels (0-3) colored by fluoroscopy = 0} and \texttt{resting blood pressure < 127} THEN \texttt{No Heart Disease} \\
IF \texttt{chest pain type = asymptomatic}   and \texttt{age>41}  and \texttt{maximum heart rate achieved <= 159}  THEN \texttt{Heart Disease}\\
...
\end{minipage}}
\caption{Rule set consisting of attributes from Heart disease dataset \cite{Dua2019} for diagnosing Heart disease}
\label{fig:decision_set_Example}
\vspace{-.2in}
\end{figure}

This paper focuses on developing \rulesets{}  \footnote{also known as DNF or AND-of-ORs}~\cite{10.1145/2939672.2939874,10.5555/3327345.3327376,10.5555/3122009.3176814,10.1145/3394486.3403204,yang2021learning,pmlr-v108-mita20a,pmlr-v84-hara18a} as interpretable models for classification. A \ruleset{} consists of an independent collection of \texttt{IF-THEN} rules that assign an example to a specific class label if the example meets the condition-value pairs of the rule. A comprehensive comparative study conducted on several datasets reveals that while  \rulesets{} learned by existing algorithms maintain high accuracy, they often fail to meet the desired criteria for interpretability. Earlier \ruleset{} algorithms, which were based on sequential covering and associative classification, suffered from the problem of having a large number of rules, which negatively impacted interpretability \cite{10.1145/2939672.2939874}. However, recent algorithms for learning \rulesets{} face a different challenge: they exhibit low coverage even when predictive power is high. This challenge is referred to as the \emph{high accuracy-low coverage problem}.

A \ruleset{} with low-coverage only represents a fraction of the input space. In most scenarios, it fails to provide individuals with decisions based on verified rules from practitioners or experts. Instead, decisions are made using a default rule, the logic of which is unknown to the individual. Consequently, the model's decision lacks transparency and is unavailable for explanation, which is expected from interpretable models. This limitation hinders the adoption of \rulesets{} in critical applications where transparency and faithfulness are crucial concerns. While the importance of \ruleset{} accuracy has been extensively discussed in the literature, the significance of developing \rulesets{} that cover the entire input space has not, to the best of our knowledge, been adequately addressed.

This paper primarily aims to address the limitation of existing algorithms and produce \emph{high quality} rules, measured by accuracy and interpretability metrics such as accuracy, coverage and parsimony. 
This paper introduces the problem of learning \textbf{high Coverage Discriminative Parsimonious Rule sets (\cdp{})}, which aims to learn \rulesets{} that meet certain quality metrics. While submodular approaches have been applied to various problems, they are seldom used in rule learning. The problem of learning \cdp{} can be formulated as submodular maximization with several constraints, some of which are sub-modular. The paper proposes a novel \textit{Graph Rules Algorithm} (\texttt{GRA}) to solve the sub-modular optimization problem. Additionally, the paper introduces a variant of \cdp{} known as \shdl{} and presents a \textit{Greedy algorithm} (\texttt{GDY}) to solve \shdl{}.


The main contributions related to this problem are summarized below.
\begin{enumerate}
    \item Existing algorithms for learning \rulesets{} often do not yield \textit{high quality} rules, measured by accuracy and interpretability metrics such as coverage, accuracy and parsimony as summarized in Table \ref{tab:summary_table} in Section \ref{sec:numeric_res}. Coverage is most critically impacted (often the coverage is as low as less than 10\%) while accuracy is very high, resulting in what we refer to as \emph{high accuracy-low coverage problem}.
Motivated by this problem we introduce the problem of learning \cdp{}. This is a new problem which does not appear to have been studied.

     \item  The main
contribution of this paper is the introduction of a novel \textit{GraphRules Algorithm}~(\texttt{GRA}). 
\texttt{GRA} is shown to have $(1 - (1 - \frac{(1 - \beta)}{\gamma (1 + d\beta)})^\gamma)$ - approximation guarantee (Theorem \ref{gs_theorem}), where $\beta$ and $\gamma$ are the maximum threshold of overlap-rate and number of rules in the \ruleset{} 
and $d$ is the minimum in-degree of rules added to rule set. 
Denoting by $B_{\alpha, \delta}$, the set of rules which have accuracy greater than $\alpha$ and width less than $\delta$, \texttt{GRA} has a running time that is quadratic in $|B_{\alpha, \delta}|$.
    
    \item With the goal of developing algorithms with reduced runtime, we introduce a relaxed variant of \cdp{}, named \shdl{}, which is a sub-modular problem over a large set $B_{\alpha,\delta}$. This problem can be approximated using a \textit{Greedy Algorithm}~(\texttt{GDY}) to a constant factor  (Theorem \ref{claim:sddp}). The run time 
 of \texttt{GDY} is nearly linear in $|B_{\alpha, \delta}|$.
    
    \item Comprehensive experiments were conducted using ten publicly available datasets. The experimental results indicate that \texttt{GRA} and \texttt{GDY} achieve more than a 2.5 times improvement in average coverage rate compared to \ids{}, the next best algorithm. Therefore, \texttt{GRA} and \texttt{GDY} achieve better coverage than the state-of-the-art baselines while remaining competitive in both discriminative power and interpretability.

    
\end{enumerate}

\section{RELATED WORK}
Rule-based models have been extensively studied in various fields over the years, with notable works including those by ~\cite{10.1145/1968.1972,doi:10.1137/S0097539795293123,feldman2012learning,klivans2004learning,sellie2008learning,jackson2008learning,servedio2001learning}. 
In sequential covering techniques such as 
RIPPER \cite{10.5555/3091622.3091637}, and CN2 \cite{clark1989cn2}, 
a rule is generated at each stage from uncovered examples using heuristics and
newly covered examples are subsequently removed. 
In associative classification techniques \cite{10.5555/3000292.3000305,989541,doi:10.1137/1.9781611972733.40}, a large set of rules is initially generated through association rule mining. Subsequently, a rule set is constructed by ranking rules based on specific interestingness criteria, followed by heuristic pruning to eliminate redundant rules. However, these techniques commonly suffer from the issue of producing a large number of rules in the rule set, which can have a detrimental impact on interpretability.

The initial algorithms for learning interpretable \rulesets{} primarily emphasize \rulesets{} with reduced model complexity, measured by having fewer conditions for improved interpretability. For instance, Bayesian Rule Sets (BRS), as proposed by \cite{10.5555/3122009.3176814}, presents a probabilistic approach for learning \rulesets{} from a pre-mined set of rules, employing a simulated annealing procedure.
Alternatively, in a different approach, the problem of learning \rulesets{} is framed as a group testing problem, as described in \cite{pmlr-v28-malioutov13}. This problem can be effectively addressed using the linear programming (LP) relaxation technique derived from the field of compressed sensing literature.

Many recent works define interpretability in terms of having a smaller rule size or fewer rules. For instance, LIBRE, \cite{pmlr-v108-mita20a}, employs boolean function synthesis to learn rules from data. Another approach, DefragTrees, as proposed by \cite{pmlr-v84-hara18a}, utilizes a probabilistic model representation to merge adjacent regions in the input space, reducing the overall number of regions. However, a notable limitation of DefragTrees is that the process of merging different regions to create fewer rules often results in more complex rules that are challenging to interpret.
In contrast, the linear programming algorithm, discussed in works by \cite{10.5555/3327345.3327376,10.1145/3394486.3403204}, generates rule sets without the need for pre-mining of rules, using integer programming. Nevertheless, LP-based methods suffer from the challenge of computational intractability when applied to large datasets and are generally applicable only to datasets with binarized feature values and binarized labels.

DRS \cite{pmlr-v84-hara18a}, defines interpretability as the presence of parsimonious \rulesets{} with minimal overlap. In a manner akin to our approach, DRS also focuses on learning high-quality rules. DRS employs a rule-discovery process in each iteration, where it samples candidate rules exclusively from the pool of uncovered records, similar to a sequential covering approach. It's worth noting that DRS assumes that all the features and class labels are binary. In contrast, IDS, as presented by \cite{10.1145/2939672.2939874}, optimize a global objective to learn \rulesets{} from a pre-mined set of rules. Their approach seeks to strike a balance between accuracy and interpretability when constructing \rulesets{}.


\section{A comparative study on the quality of rules generated by state-of-the-art algorithms for rule learning}

To comprehend the quality of rules generated by current state-of-the-art algorithms, we conduct an experimental study. In this section, we present the results of the study.

\subsection{Formal setup and Definitions}
Consider a dataset $D = \left\{\left(X_{i},y_{i}\right) | i \in 1,\dots, N\right\}$ consisting of N samples, where $X_{i}$ and $y_i$ denote the input features and the class label respectively for data-instance $i$. Let $A$ be the universal set of rules, where each rule $r \in A$ is a pair $(s,c)$, with $s$ being a conjunction of conditions-value pairs, and $c$ indicating the class label. 
In our investigation, we obtained $A$ from the decision tree ensembles (Section \ref{sec:expt_setup}).  
We recall the following definitions which are standard\cite{10.1145/2939672.2939874}.

\begin{itemize}

\item \textit{Coverage. } \textit{Cover} of a rule $r=(s,c)$ is the set of data points in $D$ that satisfy the condition-value pairs in $s$. Coverage of a \ruleset{} $S$ is $f(S) = coverage(S) = |\bigcup_{r \in S} cover(r) | $

\item \textit{Rule\mbox{-}size. } Rule-size($S$) is the number of rules $r \in S$

\item \textit{Rule-width. } Rule-width represents the number of condition-value pairs in a rule. For example, a rule $r$ = \textit{IF (cond1 $\leq$ value1) and (cond2 $\leq$ value2) THEN label1} has $rule\mbox{-} width(r) = 2$.

\item \textit{Overlap\mbox{-}rate. } \textit{Overlap($r_{i},r_{j}$)} between two rules $r_{i},r_{j}$ where $r_{i} = (s_{i}, c_{i})$ and $r_{j} = (s_{j}, c_{j})$ is the number of data points whose attributes satisfy both $s_{i}$ and $s_{j}$. 
$overlap(r_{i}, r_{j}) = | cover(r_{i}) \bigcap cover(r_{j}) |$. $overlap\mbox{-}rate(r_{i}, r_{j}) = \frac{overlap(r_{i}, r_{j})}{|cover(r_{i})|} $

\item \textit{Accuracy. } \textit{Correct-cover} of a rule $r=(s,c)$ represents the set of data points in $D$ that satisfy both $s$ and $c$. The \textit{incorrect-cover} of a rule  $r=(s,c)$ are the set of data points in $D$ that satisfy  $s$ but do not satisfy $c$. Accuracy of a rule $r$ is defined as $accuracy(r) = \frac{ |correct \mbox{-}  cover(r)|}{|cover(r)| } $

\end{itemize}

The following metrics are considered for measuring the quality of \rulesets{}:
\begin{itemize}
\item \textit{Rule Set Accuracy (RSA).} RSA is the accuracy of the entire \ruleset{}. It represents the fraction of examples in the test dataset that the \ruleset{} correctly classifies.

\item \label{defn:Coverage-rate} \textit{Coverage-rate.} This metric measures the coverage of all the rules in the \ruleset{} relative to the total number of data points. $coverage\mbox{-}rate(S) = \frac{|\cup_{r \in S} cover(r)|}{N}$. We want coverage as large as possible.

\item \textit{Maximum rule-width.} This refers to the maximum width among all rules within the \ruleset{}. $Max.width(S) = max_{r\in S} width(r)$. A smaller value on this metric indicates more meaningful rules in the \ruleset{}.


\end{itemize}

\subsection{Comparison of Existing Benchmarks}
We conduct comprehensive experiments to assess the quality of \rulesets{} learned by the existing algorithms using ten real-world datasets from the UCI machine learning repository \cite{Dua2019} as well as two real-world Alzheimer’s disease datasets obtained from NACC Uniform Data Set \cite{Weintraub2009,nacc2007,f6d19c3fa05d461fbf210ecf49fc3c43}. Detailed experiment information is provided in the Experiments section (\ref{sec:expt_setup}), and the results are summarized in Table \ref{tab:summary_table}, \ref{tab:auc}, \ref{tab:coverage} and \ref{tab:rule_width_size}.

\begin{itemize}
\item 
 It's worth noting that \ids{} ~\cite{10.1145/2939672.2939874} and RIPPER \cite{10.5555/3091622.3091637} exhibit limitations. The mean coverage-rate over the test set is only 35\% for \ids{} and 31\% for RIPPER (see Table \ref{tab:coverage}). This suggests that the corresponding rules are ineffective for over 65\% of the examples, which is a significant concern for critical applications.
    
\item While DefragTrees ~\cite{pmlr-v84-hara18a} achieves high coverage, it faces challenges related to complex \rulesets{} with large rule-width (Table \ref{tab:rule_width_size}). Rules learned by DefragTrees have a rule-width greater than or equal to 8 in seven out of the twelve datasets, indicating increased model complexity.

\item In terms of Rule Set Accuracy (RSA), DefragTrees exhibits variations of more than 10\% compared to Random Forests (RF), which serves as an additional benchmark that, while highly accurate, lacks interpretability (Table 1).
\end{itemize}



These observations collectively highlight that existing state-of-the-art (SOTA) algorithms often do not produce high-quality \rulesets{} as measured by various metrics. Coverage is one of the most severely affected metrics. To this end, the first observation points to a significant challenge, which we refer to as the \textit{high accuracy-low coverage problem}.
\begin{center}
 \fbox{{\it Rule sets with high accuracy often have low coverage.}}  
\end{center}

The objective of this paper is to investigate algorithmic solutions to address this problem.

\section{\cdp{} : High Coverage Discriminative Parsimonious Rule Sets}
Motivated by the findings of the comparative study reported in the previous section, we introduce the problem of learning \cdp{}. \cdp{} is a \ruleset{} that aims to maximizes coverage while satisfying quality constraints related to \textit{rule-width}, \textit{rule-size}, \textit{rule-accuracy} and \textit{overlap} between rules as shown in Figure \ref{fig:drs_qualities}.

\begin{figure}[h]
\centering
\vspace{-.05in}
    \fbox{
    \begin{minipage}{25em}
$accuracy(r) \geq \alpha,  \qquad \qquad \forall r \in S, \alpha \in [0,1]$\\
$overlap\mbox{-} rate (r_{i}, r_{j}) \leq \beta,  \quad  \forall r_{i}, r_{j} \in S,   r_i\neq r_j, \beta \in [0,1]$\\
$rule \mbox{-}  size (S) \leq \gamma, \qquad \qquad   \gamma \geq 1$\\
$rule\mbox{-}width(r) \leq \delta, \qquad \quad  \forall r \in S, \delta \geq 1$ \\
\end{minipage}}
\caption{Quality constraints in \texttt{\cdp{}($\alpha, \beta, \gamma, \delta$)}}
\label{fig:drs_qualities}
\vspace{-.1in}
\end{figure} 

Throughout the subsequent discussion,, we omit the parameters $\alpha, \beta, \gamma$, and $\delta$ from \texttt{\chdr{}($\alpha, \beta, \gamma, \delta$)} for the sake of simplicity. 
The set of rules with high discriminative ability, referred to as \textit{discriminative candidate set} and  denoted by $B_{\alpha, \delta}$, is defined as follows:

\begin{equation*}
    B_{\alpha, \delta} = \{r |\ accuracy(r) \geq \alpha \ \&\  width(r) \leq \delta,  \forall r \in A \},
\end{equation*} 
Using this set we define the problem of \textit{high coverage, discriminative, parsimonious rule set learning} (\hdl{}) as follows:
\vspace{-.1cm}
\begin{align*}
& \underset{S \subseteq B_{\alpha, \delta}}{\text{maximize}}
\quad  \mathrm{coverage } (S), \; \text{s.t.}
\tag{\hdl{}} 
\label{problemq} \\
&  rule \mbox{-}  size (S) \leq \gamma,\;  overlap\mbox{-} rate (r_{i}, r_{j}) \leq \beta,  r_{i}, r_{j} \in S, \forall  i\neq j
\end{align*}

The optimization problem for learning \cdp{} is presented above.To the best of our knowledge, \cdp{} is a novel concept that has not been previously explored in the literature. \cdp{} ensures that we learn \ruleset{} that directly maximizes coverage without compromising accuracy and interpretability metrics.  The quality thresholds $\alpha, \beta, \gamma$, and $\delta$ are easily interpretable, making it straightforward to align them with specific business requirements, unlike approaches such as \ids{} which lack interpretability in their hyperparameters. It's worth noting that existing algorithms are unable to directly solve the \cdp{} problem due to the constraint on $overlap\mbox{-}rate$, which is not a matroid or knapsack constraint \cite{10.1137/090779346,10.1145/1536414.1536459}. This implies that custom solvers are needed to learn \cdp{}.


\section{Two Algorithms based on Sub-modular Maximization}
In this section, we present two approaches for learning \cdp{} as a cardinality-constrained sub-modular maximization problem: (1) \textit{Graph Rules Algorithm}~(\texttt{GRA}): This approach is designed to solve \hdl{} directly, and (2) \textit{ Greedy Algorithm}~(\texttt{GDY}): This method is employed to address a relaxed version of \hdl{} known as \textit{CDPRL with Mean Overlap (\shdl{})}, which is a sub-modular function with a cardinality constraint.


\subsection{GraphRules Algorithm (GRA)}
In this subsection, we introduce the \textit{GraphRules Algorithm} (\texttt{GRA}) for learning \cdp{}. We also establish the approximation guarantee of \texttt{GRA} and examine its computational complexity. 
\texttt{GRA} relies on a novel approach that transforms \hdl{} into a submodular optimization problem over a graph data structure. This transformation is achieved by constructing a discrete directed graph, known as \textit{overlap-graph} over the \textit{discriminative  candidate set $B_{\alpha, \delta}$}. \texttt{GRA} maps the rules in $B_{\alpha, \delta}$ as nodes of the graph and establishes edges in a manner that satisfies the overlap-rate constraints of \hdl{}. As a result, we frame \hdl{} as a sub-modular function with cardinality constraint over the \textit{overlap-graph}. \texttt{GRA},  as depicted in Algorithm \ref{alg:algorithm_gra}, mainly comprises two steps:
\begin{enumerate}

    \item \textit{Overlap-graph Construction:} \label{sec:overlap-graph} 
    In this step, we create an overlap graph $G = (V, E)$ where the vertices $V$ correspond to $B_{\alpha, \delta}$, and an edge $(r_{i}$, $r_{j}) \in E$ exists if and only if the $overlap\mbox{-} rate$ $(r_i, r_j)$ $ > \beta$, with $r_{i}, r_{j} \in V, i\neq j$. The vertices in the overlap graph represent the rules in the discriminative candidate set $B_{\alpha, \delta}$. A directed edge from one vertex $r_i$ to another vertex $r_j$ exists if more than $\beta$ percentage of data points covered by $r_i$ are also covered by $r_j$.   
    
    \item\textit{Find Rule-sets from Overlap-graph: } We begin with an empty rule set, denoted as $S$. In each iteration, we select the vertex with the highest coverage in the \textit{overlap-graph (G)} and add it to $S$. Subsequently, we rearrange $G$ by removing the selected vertex and any vertices connected by a directed edge to the selected vertex. If multiple vertices share the same highest coverage, we arbitrarily select one of the vertices with maximum coverage. Vertices are added to set $S$ until it contains a maximum of $\gamma$ rules, or the overlap-graph is empty.
\end{enumerate}

\begin{algorithm}[]
\caption{GraphRules}
\label{alg:algorithm_gra}
\begin{algorithmic} 
\STATE {\bfseries Input:}  Dataset $D = \{X,y\}_{i=1}^N$, set of rules in $B_{\alpha, \delta}$, parameters $\beta,\gamma$
\STATE {\bfseries Output:} \chdr{} $S$
\STATE $G \leftarrow (V,E)$: $V \leftarrow B_{\alpha, \delta}$, $E \leftarrow \{\}$ 
\FOR{each $r_i \in  B_{\alpha, \delta}$}   
\FOR{each $r_j \in  B_{\alpha, \delta}$}   
\IF{$overlap\mbox{-}rate(r_i, r_j) \geq \beta$}
    \STATE  $E \leftarrow E \cup (r_i, r_j)$  $ \qquad \  \triangleright$ overlap-graph construction
\ENDIF    
\ENDFOR
\ENDFOR
\STATE $S \leftarrow \{\}$
\WHILE{$G$ is non-empty or size($S) \leq \gamma$}
    \STATE $v \leftarrow $ max $\  coverage(v), \  \forall v \in V$
    \STATE  $S \leftarrow S \ \cup \  v$
    \FOR{each $r_i \in V\setminus v$}
        \IF{$(r_i, v) \in E$}
            \STATE $V  \leftarrow V \setminus \{r_i\}$
        \ENDIF
    \ENDFOR
    \STATE $V \leftarrow V \setminus {v}$   
\ENDWHILE
\STATE {\bfseries Return} S
\STATE
\end{algorithmic}
\end{algorithm}

In the context of multi-class classification, rules are generated for each class, and \texttt{GRA} is applied. Once the rule set is created, class labels are assigned as follows:
If a data point does not satisfy any rules in the \ruleset{}, it is assigned to the default rule, and it is the user's responsibility to determine the default rule.
If a data point satisfies exactly one rule, denoted as $r=(s,c)$, the data point is assigned to class $c$.
In cases where a data point satisfies more than one rule, the user has the discretion to determine the assignment criteria. One option is to assign the class corresponding to the rule with the highest coverage or highest accuracy on the training data, or to use a majority vote.

\paragraph{Relevant Definitions.} Given a set $X$, a discrete function $f:S\subseteq X$ is said to be - 
\textit{Submodular: } If $\forall C \subseteq D \subseteq X$ and $e \in (X-D)$, $f(C \cup e) - f(C) \geq f(D\cup e) - f(D)$.
\textit{Monotone: } If $\forall C\subseteq D \subseteq X, f(C) \leq f(D)$.

\subsection{Analysis of GraphRules Algorithm}
In this section, we establish the approximation guarantees for \texttt{GRA}. 
\begin{theorem}\label{gs_theorem}
Consider the problem \hdl{}, and let $S^{*}$ be its optimal solution.  Let $S = GraphRules(D, B_{\alpha, \delta}, \beta, \gamma)$ (Algorithm \eqref{alg:algorithm_gra}). Let $f(S)$ and $f(S^{*})$ are the coverage of $S$ and $S^*$, respectively. We have the following result: 
$f(S) \geq (1 - (1 - \frac{(1 - \beta)}{\gamma (1 + d\beta)})^\gamma) f(S^*)$, where 
$d = min_{u \in S} indegree(u)$.
\end{theorem}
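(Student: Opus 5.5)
The plan is to adapt the standard greedy analysis for monotone submodular coverage: show that each iteration of GraphRules captures at least a fixed fraction of the remaining gap $f(S^*) - f(S_i)$, then unroll the resulting recursion. The fraction will be $\frac{1-\beta}{\gamma(1+d\beta)}$ rather than $1/\gamma$. Two losses are responsible. First, the chosen rule overlaps earlier selections by at most a $\beta$-fraction of its cover, which gives the factor $(1-\beta)$. Second, a rule of $S^*$ may have been deleted from the graph, and the rule that deleted it must act as a surrogate, which gives the factor $1/(1+d\beta)$.

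Let $S_i=\{v_1,\dots,v_i\}$ be the set after $i$ iterations, and let $V_i$ be the vertices still present in $G$. Coverage is monotone and submodular, so
\[
f(S^*) - f(S_i) \le \sum_{r\in S^*} \bigl(f(S_i\cup\{r\}) - f(S_i)\bigr) \le \gamma \max_{r\in S^*}|cover(r)\setminus \cup_{j\le i} cover(v_j)|.
\]
I would bound each term by the gain of $v_{i+1}$, splitting into two cases.

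\textbf{Case 1: $r\in S^*$ is still in $V_i$.} Then $|cover(r)|\le |cover(v_{i+1})|$, because GraphRules picks the vertex of largest coverage. Since $v_{i+1}$ survived, it has no edge into any earlier $v_j$, so its overlap-rate with each earlier selection is below $\beta$. I would convert this into a marginal gain of at least $(1-\beta)|cover(v_{i+1})|$. Strictly, the pairwise bounds only give $(1-i\beta)$ when summed, so I would either assume the overlaps are essentially nested or follow the paper's intended per-step accounting. In this case $f(S_{i+1})-f(S_i) \ge (1-\beta)\,|cover(r)\setminus\cup_j cover(v_j)|$.

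\textbf{Case 2: $r$ was removed.} Then $r$ has an edge into some selected $v_j$, i.e.\ $|cover(r)\cap cover(v_j)| > \beta|cover(r)|$. At least a $\beta$-fraction of $r$'s cover is already covered, so its residual is at most $(1-\beta)|cover(r)|$. To charge the residual to the greedy gain I would use the in-degree: $v_j$ absorbs at least $d$ removed rules (its in-degree is at least $d$ by definition of $d$), each overlapping it by more than a $\beta$-fraction. Charging these overlaps against $|cover(v_j)|$ gives $|cover(v_j)|\,(1+d\beta)\gtrsim$ the mass that $v_j$ and its in-neighbours jointly represent. This yields residual$(r)\le \frac{1+d\beta}{1-\beta}$ times the current greedy gain, up to the ordering of coverages.

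Combining the two cases gives $f(S_{i+1})-f(S_i)\ge \frac{1-\beta}{\gamma(1+d\beta)}(f(S^*)-f(S_i))$. Unrolling over $\gamma$ iterations, with $f(S_0)=0$, gives $f(S^*)-f(S_\gamma)\le (1-\frac{1-\beta}{\gamma(1+d\beta)})^\gamma f(S^*)$, which is the claim. If the graph empties before $\gamma$ iterations, every $r\in S^*$ was removed or selected, and Case 2 applied to the remaining iterations gives the same bound.

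The main obstacle is Case 2, making the $1+d\beta$ charging rigorous. The in-degree lower bound $d$ must be converted into a bound showing that a deleted optimal rule's residual coverage is dominated by the greedy gain. This requires relating $|cover(r)|$ to $|cover(v_j)|$ through the overlap inequality and then averaging over the at least $d$ in-neighbours. The secondary delicate point is the overlap accounting in Case 1. I would first attempt a direct double-counting over the edges into selected vertices, and then adjust the constants to match the stated factor.
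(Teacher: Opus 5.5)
Your route is the same as the paper's: a per-step inequality $f(S_k)\ge f(S_{k-1})+\frac{1-\beta}{k(1+d\beta)}\,(f(S^*_k)-f(S_{k-1}))$, then unrolling, with $1-\beta$ charged to overlap with earlier picks and $1/(1+d\beta)$ charged to the in-degree. The two steps you flag as obstacles are exactly the two the paper asserts without derivation. The paper simply writes $f(v_k)\ge\frac{f(S^*_k)-f(S_{k-1})}{k(1+d\beta)}$. It also claims $|\mathrm{cover}(v_k)\cap\bigcup_{j<k}\mathrm{cover}(v_j)|\le\beta\,|\mathrm{cover}(v_k)|$ ``by virtue of our algorithm'', although the graph only controls each pair separately, so your $1-i\beta$ objection is right. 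Both claims fail, and no repair can rescue the argument, because the theorem itself is false.

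Take $\gamma=2$, $\beta=0.01$ and $B_{\alpha,\delta}=\{v,a,b\}$. This is realizable with one binary feature per rule and a single label, so every rule has width $1$ and accuracy $1$. Let $\mathrm{cover}(a)$ and $\mathrm{cover}(b)$ be disjoint $99$-point sets, and let $\mathrm{cover}(v)$ consist of one point of each plus $98$ further points. The overlap-rate of $a$ (and of $b$) into $v$ is $1/99>\beta$, and $a,b$ are disjoint, so $\mathrm{indegree}(v)=2$ whether edges use $>$ or $\ge$. GraphRules picks $v$ (coverage $100>99$), deletes $a$ and $b$, and halts with $f(S)=100$ and $d=2$. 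Yet $S^*=\{a,b\}$ is feasible with $f(S^*)=198$. The claimed factor is $1-(1-\frac{0.99}{2.04})^2\approx 0.735$, but $100/198\approx 0.505$.

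More generally, take $\gamma$ disjoint rules of size $m-1$, each meeting one rule $v$ of size $m$ in just over $\beta(m-1)$ points. This is possible for large $m$ when $\gamma\beta<1$. The ratio tends to $1/\gamma$, while $d\beta=\gamma\beta<1$ keeps the claimed factor above $1-e^{-(1-\beta)/2}$, about $0.39$ for small $\beta$.

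These examples show where your Case 2 breaks.

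\textbf{First}, you use $d$ as a lower bound on how many rules a pick absorbs, but charging needs an upper bound on the absorbed mass. Since $d$ is a minimum over $S$, one isolated low-coverage pick sets $d=0$ without reducing the loss. Take $\beta=0.4$ and $\gamma=3$: three disjoint $100$-point rules each meet a $123$-point rule $v$ in $41$ points, and there is also an isolated $1$-point rule. GraphRules returns coverage $124$ against an optimum of $300$, a ratio of about $0.41$, below the claimed $1-0.8^3=0.488$. The paper has the same defect: its per-step factor involves $\mathrm{indegree}(v_k)$, and combining steps would need the maximum, not the minimum.

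\textbf{Second}, $v_j$ and its $d_j$ in-neighbours present when it is picked can jointly cover close to $(1+d_j(1-\beta))\,|\mathrm{cover}(v_j)|$ points, not $(1+d_j\beta)\,|\mathrm{cover}(v_j)|$. Each such neighbour has at most $|\mathrm{cover}(v_j)|$ points and need only place slightly more than a $\beta$-fraction of them inside $\mathrm{cover}(v_j)$. The first example gives $296$ against $(1+d\beta)\cdot 100=102$.

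\textbf{Third}, your closing clause fails. Once the graph is empty there are no remaining iterations, so the greedy gain is $0$ while $f(S^*)-f(S_i)$ is still positive. The per-step inequality you are aiming for is false, not merely hard to prove.
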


\begin{proof}
Let $S^{*}$ be the optimal solution to the problem  \hdl{}. Let $S_{k}$ be the set of $k$ rules chosen  by algorithm till $k^{th}$ step. Similarly, assume that the optimal solution $S^{*}$ has $k$ rules given by $S^{*}_{k}$, where $k\leq \gamma$. Let $v_{k}$ be the rule added in $k^{th}$ iteration in $S_{k}$. Data-points that are not covered by $S^{*}_{k} - S_{k-1}$ are covered by some of the $k$ rules in $S^{*}_{k}$. One of the $v^{*}_{i} \in S^{*}_{k}$, $i=1,..., k$ must cover atleast $\frac{ f(S^{*}_{k}) - f(S_{k-1})}{k}$ data-points. 
Let $d$ be the  indegree of node $v_{k}$ (which corresponds to a rule) in overlap-graph $G$. 
Let $\beta$ is the overlap-rate. The datapoints that are covered by a rule $v_{k}$ might be covered by some other rules in $S_{k-1}$.  This means that the coverage of node $v_{k}$ is given as. $f(v_{k})  \geq \frac{ f(S^{*}_{k}) - f(S_{k-1})}{k(1 + d\beta)} $

From the above equation,  coverage of $v_{1}$ is given by $f(v_{1}) \geq \frac{f(S^{*}_{k})}{k(1 + d\beta)}  \geq \frac{(1-\beta)f(S^{*}_{k})}{k(1 + d\beta)}$ since $\beta \in (0,1)$.  By GraphRules algorithm $v_{1}$ is the node with highest coverage in overlap-graph. Node $v_{k}$ may have some datapoints overlapping with rules $v_{i} \in S_{k-1}$. The number of datapoints overlapping is less than $\beta f(v_{k})$, by virtue of our algorithm. This means that $f(S_{k-1} \cap v_{k}) \leq \beta f(v_{k})$. $- f(S_{k-1} \cap v_{k}) \geq - \beta f(v_{k})$.
We have the following:
\begin{equation}
\begin{split}
f(S_{k})  & =  f(S_{k-1})  +  f(v_{k}) - f(S_{k-1} \cap v_{k}) \\
& \geq f(S_{k-1})  + f(v_{k}) - \beta  f(v_{k})
\end{split}
\end{equation}
Substituting the value of $f(v_{k})$ in the above equation. 
\begin{equation}
f(S_{k})  \geq  f(S_{k-1})  +  (1-\beta) \frac{ (f(S^{*}_{k}) - f(S_{k-1})) }{k(1 + d\beta)} \\
\end{equation}
For brevity, we replace $\frac{1+d\beta}{1-\beta}$ by a constant $C$, we get, 
\begin{equation}
\begin{split}
f(S_{k})&\geq f(S_{k-1})  +   \frac{( f(S^{*}_{k}) - f(S_{k-1}))}{k C} \\
&\geq (1 - \frac{1}{k C})  f(S_{k-1}) + \frac{ f(S^{*}_{k})}{Ck} \\ 
\end{split}
\end{equation}
Since $\frac{f(S^{*}_{k})}{k} \leq \frac{f(S^{*}_{k-1})}{k-1}$, 
\begin{equation}
\begin{split}
f(S_{k})&\geq (1 - \frac{1}{k C})  f(S_{k-1}) + \frac{ f(S^{*}_{k})}{k C} \\ 
& \geq (1 - (1 - \frac{1}{k C})^k) f(S^{*}_{k})
\end{split}
\end{equation}
Since $k \leq \gamma$,  $(1 - (1 - \frac{1}{k C})^k) \geq (1 - (1 - \frac{1}{\gamma C})^\gamma)$. Therefore, $f(S) \geq (1 - (1 - \frac{1}{\gamma C})^\gamma) f(S^{*})$ or $f(S) \geq (1 - (1 - \frac{(1 - \beta)}{\gamma (1 + d\beta)})^\gamma) f(S^{*})$ 
\end{proof}


Now, let us consider the values of $\beta$ for which the overlap-graph denoted as $G=(V,E)$ is completely disconnected. This means that $overlap\mbox{-} rate(r_i, r_j) = 0$ for all $r_i, r_j \in V, i\neq j$. In this scenario, the optimization problem \hdl{} simplifies to $maximize_{S \subseteq B_{\alpha, \delta}, |S| \leq \gamma} coverage(S)$. This can be viewed as a monotone sub-modular function with cardinality constraint. For such cases, a greedy algorithm 
with a $(1 - e^{-1})$ -approximation for maximizing monotone submodular functions with a cardinality constraint is available, as described in \cite{Nemhauser1978}. 
Interestingly, this approach is equivalent to \texttt{GRA} when the \textit{overlap-graph} is completely disconnected. When the \textit{overlap-graph} is completely disconnected, \texttt{GRA} selects the vertex with the maximum coverage in the \textit{overlap-graph} and adds it to the \ruleset{}.

\begin{corollary}
Consider the problem \hdl{}, and let $S^{*}$ be its optimal solution.  Let $G = (V, E)$ denote the overlap-graph constructed from $B_{\alpha,\delta}$ and let $S = GraphRules(D, B_{\alpha, \delta}, \beta, \gamma)$ (Algorithm \ref{alg:algorithm_gra}). If $G$ is completely disconnected, i.e., $E=\phi$,  $f(S) \geq (1 - (1 - \frac{1}{\gamma})^\gamma) f(S^*)$, where $f(S)$ and $f(S^*)$ denotes the coverage of $S$ and $S^*$ respectively.
\end{corollary}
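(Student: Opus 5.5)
When $E=\emptyset$, the overlap constraint in \hdl{} places no restriction beyond the cardinality bound. The plan is therefore to reduce the corollary to the classical greedy analysis for maximum coverage, rather than specialise Theorem~\ref{gs_theorem}. Specialising the theorem is not enough: substituting $\beta=0$ gives the factor $1-(1-\frac{1}{\gamma})^\gamma$ only formally, because that proof compares $f(v_k)$, the \emph{total} coverage of the selected rule, with the average residual gain of the optimal rules.

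The first step is to observe what the empty edge set gives us. With $E=\emptyset$, every pair of distinct rules in $B_{\alpha,\delta}$ has overlap-rate below the threshold; in the reading used just before the corollary, the overlap-rate is $0$. Two consequences follow.
\begin{itemize}
\item[(i)] No vertex is ever deleted as a neighbour, so \texttt{GRA} keeps adding rules until $|S|=\gamma$ (or $B_{\alpha,\delta}$ is exhausted, in which case $S=B_{\alpha,\delta}\supseteq S^*$ and we are done).
\item[(ii)] The covers of distinct candidate rules are pairwise disjoint, so $f$ is modular on $B_{\alpha,\delta}$: $f(S_{k-1}\cup\{v\})-f(S_{k-1})=f(v)$ for every $v\notin S_{k-1}$.
\end{itemize}
By (ii), choosing the vertex of maximum coverage is exactly choosing the vertex of maximum marginal gain, so \texttt{GRA} coincides with the greedy algorithm of \cite{Nemhauser1978} applied to the monotone submodular function $f$ under $|S|\le\gamma$. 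Also, every $S\subseteq B_{\alpha,\delta}$ with $|S|\le\gamma$ is feasible for \hdl{}, so $S^*$ is an optimum of that cardinality-constrained problem.

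The second step is the standard recursion. Let $S_k$ be the greedy set after $k$ steps. By monotonicity and submodularity,
\[
f(S^*)\le f(S_{k-1}\cup S^*)\le f(S_{k-1})+\sum_{v\in S^*\setminus S_{k-1}}\bigl(f(S_{k-1}\cup\{v\})-f(S_{k-1})\bigr).
\]
The sum has at most $\gamma$ terms, and each term is at most the greedy gain $f(S_k)-f(S_{k-1})$. Hence
\[
f(S^*)-f(S_k)\le\Bigl(1-\tfrac{1}{\gamma}\Bigr)\bigl(f(S^*)-f(S_{k-1})\bigr).
\]
Iterating from $f(S_0)=0$ for $\gamma$ steps gives $f(S)\ge\bigl(1-(1-\frac1\gamma)^\gamma\bigr)f(S^*)$. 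This factor is at least $1-e^{-1}$.

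The main obstacle is the first step: making rigorous that $E=\emptyset$ yields the greedy-on-marginal-gains behaviour. If one only knows that overlap-rates are below $\beta$ rather than equal to $0$, then maximum coverage and maximum marginal gain can differ. In that case I would follow the paper's own justification and take the completely disconnected regime to mean overlap-rate $0$, as stated just before the corollary, so that (ii) holds. Once that identification is in place, the recursion in the second step is routine.
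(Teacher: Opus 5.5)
Your proof is correct under the zero-overlap reading, and it takes a different route from the paper. The paper gives no separate argument. The statement is presented as a corollary of Theorem~\ref{gs_theorem} with $d=0$ and vanishing overlap, so that $C=\frac{1+d\beta}{1-\beta}$ becomes $1$. The preceding paragraph only asserts that \texttt{GRA} then coincides with the greedy algorithm of \cite{Nemhauser1978}. You prove that coincidence: nothing is ever deleted, and disjoint covers make the maximum-coverage vertex the maximum-marginal-gain vertex. You then run the standard recursion yourself. This makes your argument self-contained, and it does not inherit the weak step in the proof of Theorem~\ref{gs_theorem}. That step bounds the overlap of $v_k$ with all of $S_{k-1}$ by $\beta f(v_k)$ using only a pairwise constraint. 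The paper's route is shorter, but it is only as sound as that theorem.

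Your stated reason for avoiding the specialisation points at the wrong step, however. When $E=\emptyset$, comparing $f(v_k)$ with the average residual gain of the optimal rules is valid: no optimal rule is ever deleted, and total coverage dominates residual coverage. With zero overlap, $f(S_{k-1}\cap v_k)=0$, so the chain in that proof collapses to exactly your recursion. The only fair objection is that $\beta$ in the theorem is the threshold. Its statement with $d=0$ therefore yields only $1-(1-\frac{1-\beta}{\gamma})^\gamma$.

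Two further points. First, under your reading the recursion is not even needed. $f$ is nonnegative and modular on $B_{\alpha,\delta}$, and \texttt{GRA} returns (at most) $\gamma$ rules of largest coverage. Such a set maximises $f$ under $|S|\le\gamma$, so in fact $f(S)=f(S^*)$.

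Second, adopting the zero-overlap reading is forced, not merely convenient. If $E=\emptyset$ only says that all overlap-rates are below a positive $\beta$, the corollary is false. Take $\beta=0.8$ and $\gamma=2$, with four rules in $B_{\alpha,\delta}$:
\begin{itemize}
\item $a_1,a_2$ each cover $100$ points, $79$ of them shared, so their overlap-rate is $0.79$;
\item $c_1,c_2$ each cover $99$ further points, disjoint from everything else.
\end{itemize}
Then $E=\emptyset$, and \texttt{GRA} picks $a_1,a_2$ with $f(S)=121$. But $f(\{a_1,c_1\})=199$, and $121/199<\frac34=1-(1-\frac12)^2$.
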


In the limiting case, when $\gamma \rightarrow \infty$, the approximation guarantee of \texttt{GRA} when the overlap-graph is completely disconnected is the same as the  greedy algorithm proposed in \cite{Nemhauser1978}.

\subsubsection{Computational Complexity of GRA}
Let $N$ be the number of records in the dataset and $A$ be the original rules obtained from the rule mining technique. The computational complexity of obtaining $B_{\alpha,\delta}$ is $\mathcal{O}(N|A|)$, where $|A|$ represents the number of rules in set $A$. 
The computational complexity of \texttt{GRA} is $\mathcal{O}(|B_{\alpha, \delta}|^{2}  )$, where $|B_{\alpha, \delta}|$ is the number of rules in set $B_{\alpha, \delta}$.
Notably as $\alpha$ increases, resulting in fewer rules in set $B_{\alpha, \delta}$, the time complexity decreases. Similarly, if $\delta$ decreases, leading to a reduced number of rules in $B_{\alpha, \delta}$, the time complexity also decreases.  Since \texttt{GRA} exhibits a running time that is quadratic in size of $B_{\alpha, \delta}$, in the next section, we will explore \texttt{GDY}, which has a running time that is nearly linear in the size of $B_{\alpha, \delta}$.

\subsection{Greedy Algorithm (GDY)}


In this section, we delve into the Greedy algorithm~\cite{NEURIPS2019_ad3019b8} designed for sub-modular maximization with a runtime that is nearly linear in the size of $B_{\alpha, \delta}$, to learn \cdp{}.  Since, the \textit{overlap-rate} constraint in \hdl{} does not fit the framework of a matroid or a knapsack constraint, we take a more relaxed approach to the problem.

In this relaxed problem, we aim to minimize \textit{mean-overlap} of \ruleset{} $\overline{S}$ while simultaneously maximizing coverage. The $mean\mbox{-}overlap(\overline{S})$  is defined as $mean\mbox{-}overlap(\overline{S}) = \frac{\sum_{r_i,r_j \in \overline{S}, i\neq j}overlap(r_i,r_j)}{|\overline{S}|^2}$. 
\begin{equation}
g(\overline{S}) = \mathrm{coverage}(\overline{S}) + \lambda \left(N - mean\mbox{-}overlap(\overline{S})\right)
\end{equation}

The relaxed problem referred to as \textit{High Coverage Discriminative Parsimonious Rule set Learning using Mean Overlap \shdl{}} is as follows:
\vspace{-.1cm}
\begin{equation}\tag{\shdl{}}\label{problemqvariant} 
\begin{aligned}
\underset{\overline{S} \subseteq B_{\alpha, \delta}, |\overline{S}| \leq \gamma}{\text{maximize}}
\ g(\overline{S})
\end{aligned}
\end{equation}
In \shdl{}, we optimize over the discriminative candidate set $B_{\alpha, \delta}$, ensuring that the quality constraints on \textit{accuracy} and \textit{rule-width} are satisfied. In \hdl{}, the $overlap\mbox{-}rate$ is strictly  $\le \beta$, while in \shdl{}, the \textit{mean-overlap} is upper bounded by $N$, where $N$ be the total number of data-points. 
We demonstrate that \shdl{} is a non-monotone sub-modular optimization function with a cardinality constraint. To solve this, we employ the Greedy algorithm designed for non-monotone submodular optimization functions with a cardinality constraint.
 

Let us consider the following proposition before proceeding with the solution to this problem. 

\begin{proposition}\label{propshdl}
Let $g(\overline{S}) = g_{1}(\overline{S}) + \lambda g_{2}(\overline{S})$ where, $g_{1}(\overline{S}) = coverage(\overline{S})$ and $g_{2}(\overline{S}) = N - \sum_{r_i, r_j \in \overline{S}, i\neq j} \frac{overlap(r_i, r_j)}{|\overline{S}|^2}$. For all $\lambda \ge 0, g(\overline{S})$ is a non-monotone sub-modular function.
\end{proposition}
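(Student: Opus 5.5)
The plan is to treat the two summands separately and then combine them. Nonnegative combinations of submodular functions are submodular, so with $\lambda\ge 0$ it suffices to prove that $g_1$ and $g_2$ are each submodular on $2^{B_{\alpha,\delta}}$. Non-monotonicity of $g$ will then be shown by an explicit small instance.

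For $g_1(\overline{S})=|\bigcup_{r\in\overline{S}}cover(r)|$ I will use the standard coverage argument. For $C\subseteq D$ and $e\notin D$, the marginal gain is
\[
g_1(C\cup e)-g_1(C)=\Bigl|cover(e)\setminus\bigcup_{r\in C}cover(r)\Bigr| .
\]
Since $\bigcup_{r\in C}cover(r)\subseteq\bigcup_{r\in D}cover(r)$, this quantity is at least the corresponding one for $D$. The same argument also gives monotonicity of $g_1$.

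For $g_2$, I write $P(\overline{S})=\sum_{r_i\ne r_j\in\overline{S}}overlap(r_i,r_j)$ and $O(e,\overline{S})=\sum_{r\in\overline{S}}overlap(e,r)$. With $k=|\overline{S}|\ge 1$, the marginal gain is
\[
\Delta_2(e\mid \overline{S}) = \frac{P(\overline{S})}{k^2}-\frac{P(\overline{S})+2O(e,\overline{S})}{(k+1)^2}
= \frac{(2k+1)P(\overline{S})}{k^2(k+1)^2}-\frac{2O(e,\overline{S})}{(k+1)^2}.
\]
The empty and singleton sets have $P=0$ and are handled directly. Submodularity of $g_2$ means $\Delta_2(e\mid C)\ge\Delta_2(e\mid D)$ for $C\subseteq D$. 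I will prove this by induction on $|D\setminus C|$, reducing to the case $D=C\cup\{x\}$. There I expand both marginals and use $P(C\cup x)=P(C)+2O(x,C)$ and $O(e,C\cup x)=O(e,C)+overlap(e,x)$. The term $-2O(e,\cdot)$ only decreases as the set grows, because overlaps are nonnegative; this is the supermodular direction of the pairwise sum, so it goes the right way. The difficulty is that the normalizers $1/k^2$ and $1/(k+1)^2$ also change with $|\overline{S}|$.

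The main obstacle is the first term $\frac{(2k+1)P}{k^2(k+1)^2}$, which has no definite sign behaviour under set growth. My first attempt will be to bound the increase of $P$ using $overlap(r_i,r_j)\le N$. I will compare the increments $(2k+1)/(k^2(k+1)^2)$ at consecutive $k$ and try to show that the growth of $P$ is dominated by the extra $-2O(e,\cdot)$ penalty together with the decay of the normalizer. I will state explicitly whichever inequality between $O(x,C)$, $O(e,C)$ and $overlap(e,x)$ the argument requires. If this cannot be closed for all configurations, it is exactly the place where the proposition as worded needs an extra hypothesis, so I will check it carefully on a three-rule example before committing.

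Finally, for non-monotonicity of $g$ I will take two rules with identical covers of size $m$. For the singleton $\{r_1\}$, $g(\{r_1\})=m+\lambda N$. For the pair, the coverage is still $m$ while the mean overlap rises to $2m/4=m/2$, so $g(\{r_1,r_2\})=m+\lambda(N-m/2)<g(\{r_1\})$ for any $\lambda>0$. In contrast, adding a rule with disjoint cover strictly increases $g$, so $g$ is neither monotone increasing nor monotone decreasing.
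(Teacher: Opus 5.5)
\textbf{The proposition is false as stated, so this cannot become a proof.} Your reduction to one-step marginals and your treatment of $g_1$ are fine. But the step you flag as the main obstacle is not merely hard to close: it is false. No inequality among $O(x,C)$, $O(e,C)$ and $overlap(e,x)$ that holds for a general rule pool will rescue it.

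In your own formula, take $e$ with $O(e,\cdot)=0$, and $|C|=k$ with $P(C)=0$. Then $\Delta_2(e\mid C)=0$. For $D=C\cup\{x\}$ you get $\Delta_2(e\mid D)=\frac{(2k+3)P(D)}{(k+1)^2(k+2)^2}>0$ as soon as $x$ overlaps some rule of $C$. Concretely, let $cover(a)=cover(b)=\{1\}$, $cover(e)=\{2\}$, $C=\{a\}$, $D=\{a,b\}$. Then
\[
g(C\cup\{e\})-g(C)=1,\qquad g(D\cup\{e\})-g(D)=1+\lambda\Bigl(\tfrac{2}{4}-\tfrac{2}{9}\Bigr)=1+\tfrac{5\lambda}{18}.
\]
Because $e$ is disjoint from everything, its coverage marginal is the same at $C$ and at $D$, so $g_1$ cannot compensate. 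Hence $g$ itself, not just $g_2$, violates diminishing returns for every $\lambda>0$. With unordered pairs the excess is $5\lambda/36$, with the same conclusion. The culprit is the $|\overline{S}|^{-2}$ normalizer you were worried about. Adding a non-overlapping rule dilutes the mean overlap, and that dilution is worth more the more overlap the current set already has, which is supermodular behaviour.

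The three-rule check you planned would have exposed this. The paper's proof has the same hole: after treating $g_1$, it merely asserts that ``a similar argument'' gives submodularity of $g_2$.

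The statement also fails at $\lambda=0$ on the monotonicity side, since then $g=g_1$ is monotone. Your non-monotonicity instance is correct but genuinely needs $\lambda>0$. It is also the right way to argue non-monotonicity, rather than inferring it for a sum from one summand.

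Your computation does point to a repair: use a \emph{constant} normalizer $M$, for example $M=|B_{\alpha,\delta}|^2$, which also keeps $g_2\ge 0$. The marginal of $N-\frac{1}{M}\sum_{i\neq j}overlap(r_i,r_j)$ is then $-2O(e,\overline{S})/M$, which is non-increasing in $\overline{S}$ because overlaps are nonnegative. With that change, $g$ is submodular for all $\lambda\ge 0$ and non-monotone for $\lambda>0$ by your instance, and the Kuhnle et al.\ guarantee can legitimately be invoked. With $|\overline{S}|^2$ in the denominator, no proof of the proposition exists.
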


\begin{proof}
Note that $g(\overline{S})$ is not monotone, if either $g_{1}$ or $g_{2}$ is non-monotone.
We can see that $g_{2}$ is non-monotone. Therefore, $g$ is non-monotone.

Suppose $C \subseteq D$ and $e \in X-D$. Let us assume that $g_{1}(C) = \sum_{r_{i} \in C} coverage(r_{i}) = x$ and $g_{1}(D) = \sum_{r_{i} \in D} coverage(r_{i}) = x + \epsilon$, where $\epsilon$ is the number of datapoints that are covered by rules in $D-C$ and not any rules in $C$.  Then,  $g_{1}(C \cup e) =  x + \epsilon'$, where $\epsilon'$ denote number of data-points that are covered by $e$ and not any rules in $C$. $g_{1}(D \cup e) =  x + \epsilon +( \epsilon' - \epsilon'')$, where $\epsilon''$ is the number of data-points that are covered by both $e$ and rules in $D-C$. 
\begin{equation}
\begin{split}
g_{1}(C \cup e) - g_{1}(C) & =  x + \epsilon' - x = \epsilon'\\
g_{1}(D \cup e) - g_{1}(D) & =   x + \epsilon +( \epsilon' - \epsilon'') - ( x + \epsilon) 
= \epsilon' - \epsilon'' \\
\implies  g_{1}(C \cup e) - g_{1}(C)  &\geq g_{1}(D \cup e) - g_{1}(D)\\
\end{split}
\end{equation}
Hence, $g_{1}$ is sub-modular. A similar argument demonstrates that $g_{2}$ is sub-modular. Therefore  $g$ is sub-modular.
\end{proof}

We employ the Greedy algorithm presented in ~\cite{NEURIPS2019_ad3019b8} to compute the solution for \shdl{}. The Greedy algorithm from ~\cite{NEURIPS2019_ad3019b8} is an efficient, nearly linear time, deterministic approximation algorithm designed for optimizing non-monotone sub-modular functions with a cardinality constraint. The following theorem provides an approximation guarantee for \texttt{GDY}:

\begin{theorem}\label{claim:sddp}
Let $g$ be a non-monotone sub-modular function with cardinality constraint.
Let $\overline{S^*} = \text{argmax}_{\overline{S} \subseteq B_{\alpha, \delta}, |\overline{S}| \leq \gamma} g(\overline{S})$.
Let $\overline{S} = Greedy(g, B_{\alpha, \delta}, \rho, \gamma)$. Then,  $g(\overline{S}) \geq \frac{1}{4} g(\overline{S^*})$ and Greedy makes $\mathcal{O}(|B_{\alpha, \delta}| log(\gamma))$ queries to $f$, where $|B_{\alpha, \delta}|$ is the number of rules in $B_{\alpha, \delta}$.
\end{theorem}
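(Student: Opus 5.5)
The plan is to reduce Theorem \ref{claim:sddp} to the general guarantee for the deterministic Greedy algorithm of \cite{NEURIPS2019_ad3019b8}. That result says: for any non-negative submodular (not necessarily monotone) set function on a ground set $X$ with cardinality bound $k$, the algorithm returns a feasible set with value at least $\frac{1}{4}-\rho$ times the optimum (or exactly $\frac14$ in the regime used here), and it makes $\mathcal{O}(|X|\log k)$ value queries. So the proof amounts to checking that \shdl{} meets the hypotheses of that theorem, with ground set $X=B_{\alpha,\delta}$, objective $g$, and $k=\gamma$.

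\textbf{Step 1: submodularity.} By Proposition \ref{propshdl}, $g=g_1+\lambda g_2$ is submodular for every $\lambda\ge 0$. A non-negative combination of submodular functions is submodular, and Proposition \ref{propshdl} already packages this, so nothing more is needed here.

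\textbf{Step 2: non-negativity.} The guarantee in \cite{NEURIPS2019_ad3019b8} is stated for non-negative functions, so this must be verified. Clearly $g_1(\overline{S})=coverage(\overline{S})\ge 0$. For $g_2$, each $overlap(r_i,r_j)\le N$. There are at most $|\overline{S}|(|\overline{S}|-1)<|\overline{S}|^2$ ordered pairs, so $mean\mbox{-}overlap(\overline{S})<N$ and hence $g_2\ge 0$. For $\overline{S}=\emptyset$ we adopt the convention $g_2(\emptyset)=N$. Then $g\ge 0$ on all subsets of $B_{\alpha,\delta}$.

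\textbf{Step 3: invoke the theorem.} Feasibility of the output holds because Greedy only builds sets of size at most $\gamma$ inside $B_{\alpha,\delta}$. The query bound $\mathcal{O}(|B_{\alpha,\delta}|\log\gamma)$ is then inherited directly, since each query to $g$ is one value-oracle call. The theorem statement writes ``queries to $f$''; this should be read as queries to the objective $g$. Under these hypotheses the cited algorithm gives $g(\overline{S})\ge \frac14\, g(\overline{S^*})$, with $\rho$ its accuracy parameter.

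\textbf{Main obstacle.} The delicate point is Step 1 for $g_2$. The normalization $1/|\overline{S}|^2$ makes the marginal gain of $g_2$ depend on $|\overline{S}|$ in a non-obvious way, and Proposition \ref{propshdl} handles it only by the remark ``a similar argument''. If a self-contained check is needed, I would write the marginal gain of adding $e$ to $C$ as
\[
P(C)\Big(\tfrac{1}{|C|^2}-\tfrac{1}{(|C|+1)^2}\Big)-\tfrac{2\,\mathrm{ov}(e,C)}{(|C|+1)^2},
\]
where $P(C)=\sum_{r_i,r_j\in C,\,i\neq j}overlap(r_i,r_j)$ and $\mathrm{ov}(e,C)=\sum_{r\in C}overlap(e,r)$. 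I would then try to compare this expression for $C\subseteq D$. I expect this comparison to require care, and possibly extra assumptions, because the first term can grow with $D$. For the present theorem, however, I will rely on Proposition \ref{propshdl} as stated.
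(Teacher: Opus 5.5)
Your reduction is the same as the paper's. The paper's proof consists only of quoting the guarantee of \cite{NEURIPS2019_ad3019b8} and invoking Proposition~\ref{propshdl}. Your Step~2 (non-negativity, with $g_2(\emptyset)=N$) is correct and checks a hypothesis of that guarantee that the paper never verifies.

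The gap is Step~1, where you defer to Proposition~\ref{propshdl} despite the doubt you raise in your last paragraph. That proposition is false, and your own marginal-gain formula shows it.

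Take rules $a,b,e\in B_{\alpha,\delta}$ with $overlap(a,b)>0$ and $cover(e)$ disjoint from $cover(a)\cup cover(b)$; this is a typical configuration for leaves of tree ensembles. Let $C=\{a\}\subseteq D=\{a,b\}$. Then $P(C)=0$, $\mathrm{ov}(e,C)=\mathrm{ov}(e,D)=0$ and $P(D)=2\,overlap(a,b)$. So the $g_2$-gain of $e$ is $0$ at $C$ but $2\,overlap(a,b)\bigl(\tfrac14-\tfrac19\bigr)=\tfrac{5}{18}\,overlap(a,b)$ at $D$. The coverage gain is $|cover(e)|$ in both cases. Hence
\[
g(C\cup\{e\})-g(C)=|cover(e)|<|cover(e)|+\tfrac{5\lambda}{18}\,overlap(a,b)=g(D\cup\{e\})-g(D)
\]
for every $\lambda>0$. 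Dividing by $|\overline{S}|^2$ rewards a non-overlapping rule more when the current set already has more overlap, which is increasing rather than diminishing returns.

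So for the \texttt{CDPRL-MO} objective the hypothesis of the cited theorem fails, and neither your argument nor the paper's establishes the bound. The result holds as a pure citation only if submodularity (and non-negativity, which an arbitrary submodular $g$ need not have) is simply assumed, which makes Step~1 moot. To apply it to a concrete rule-set objective you would need a different penalty. One option is the unnormalized $N|B_{\alpha,\delta}|^2-\sum_{i\neq j}overlap(r_i,r_j)$: its marginal $-2\,\mathrm{ov}(e,S)$ is non-increasing in $S$, and it is non-negative on all subsets of $B_{\alpha,\delta}$.

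A smaller issue, also inherited from the paper, concerns the constant. The quoted guarantee for FIG is $g(\overline{S})\ge\frac{1-6\rho}{4}\,g(\overline{S^*})$ with $\mathcal{O}\bigl(\frac{n}{\rho}\log\frac{\gamma}{\rho}\bigr)$ queries. With $\rho$ constant this yields $\frac14-\epsilon$ together with $\mathcal{O}(|B_{\alpha,\delta}|\log\gamma)$ queries, not the exact factor $\frac14$. Your parenthetical ``exactly $\frac14$ in the regime used here'' has no support, and the statement should be read with $\frac14-\epsilon$.
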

\begin{proof}
Kuhnle et al.\cite{NEURIPS2019_ad3019b8}\label{als_theorem} states the following theorem:
 \textit{Let $g:2^{[n]}\rightarrow \mathbb{R}^{+}$ be  sub-modular, let $k \in [n]$ and $\epsilon > 0$. Let $O = argmax_{\overline{S} \leq k} g(\overline{S})$. Choose $\rho$ such that $(1-6\rho)/4 > 1/4-\epsilon$ and let $C = FIG(g, k, \rho)$. Then, $g(C) \geq (1-6\rho) g(O)/4 \geq (1/4-\epsilon)g(O)$ and FIG algorithm makes $\mathcal{O}(\frac{n}{\rho}log\frac{k}{\rho})$ queries to $f$.} This theorem applies since the objective g is sub-modular with cardinality constraint as shown in Proposition~1. 
\end{proof}

\section{EXPERIMENTS}
In this section, we assess the performance of \cdp{} learned using \texttt{GRA} and \texttt{GDY}. We provide an overview of the baselines and experimental setup. Our analysis encompasses the classification performance and the quality constraints of rule sets learned from ten real-world datasets from the UCI machine learning repository \cite{Dua2019} as well as two real-world Alzheimer’s disease datasets obtained from the NACC Uniform Data Set \cite{Weintraub2009,nacc2007,f6d19c3fa05d461fbf210ecf49fc3c43}.

\subsection{Baselines and Experimental Setup}\label{sec:expt_setup}

\begin{table*}[t]
\centering
\begin{tabular}{lrrccccccc}
\hline
Datasets & \#records & \#rules & GRA &  GDY & IDS  & Defrag & RIPPER & LR & RF \\
\hline
Heart & 303 & 1328 & \textbf{0.86}  & 0.85 & 0.80 & 0.71 & 0.79 & 0.84 & 0.85 \\
Student & 649 & 1413 & \textbf{0.71} & 0.70 & 0.70 & 0.59 & 0.63  & 0.69 & 0.70 \\
Breast & 699 & 1254 & \textbf{0.95}  & 0.93  & 0.92 & 0.91 & 0.92 & 0.93 & \textbf{0.95} \\
Thyroid & 3428 & 1043 & \textbf{0.99} & \textbf{0.99} & 0.98 & 0.97 & 0.98 & 0.93 & \textbf{0.99} \\
Abalone & 4174 & 1528 & \textbf{0.84} & 0.83 & 0.83 & 0.81 & 0.78 & 0.80 & \textbf{0.84} \\
Spam & 4601 & 1252 & \textbf{0.89} & 0.88 & 0.88 & 0.73 & 0.88 & \textbf{0.91} & \textbf{0.91} \\
Wine & 4899 & 1484 & \textbf{0.75} & 0.72 & 0.73 & 0.67 & 0.67 & 0.73 & \textbf{0.77} \\
Magic & 19020 & 1530 & \textbf{0.82}  & \textbf{0.82}  & 0.80 & 0.66 & 0.66 & 0.79 & \textbf{0.84} \\
Adult & 32561 & 160 & \textbf{0.82} & 0.81 & 0.81 & 0.81 & 0.81 & 0.79 & \textbf{0.85}\\
Bank & 41188 & 1503 & \textbf{0.90} & \textbf{0.90}  & 0.89 & 0.89 & 0.89 & 0.77 & \textbf{0.90} \\
MCI Sc & 108399 & 397 & \textbf{0.74} &  \textbf{0.74} & 0.72 & 0.70 & 0.73 & 0.72 & \textbf{0.74} \\
Dementia Sc & 145028 & 318 & \textbf{0.87} & 0.86 & 0.82 &  0.66 & 0.85 & 0.83 & \textbf{0.87} \\
\hline
\end{tabular}
\caption{Rule Set Accuracy(RSA) of rule sets  generated by different algorithms on test data-set. From the Table, it is evident that GRA has the highest accuracy when compared with rule set based baselines}
\label{tab:auc}
\vspace{-0.25cm}
\end{table*}

We compare the \cdp{} learned using \texttt{GRA} and \texttt{GDY} with \rulesets{} learned using state-of-the-art algorithms : \texttt{IDS} \footnote{https://github.com/lvhimabindu/interpretable\_decision\_sets} \cite{10.1145/2939672.2939874}, DefragTrees \footnote{https://github.com/sato9hara/defragTrees} \texttt{(DT)} \cite{pmlr-v84-hara18a}, and \texttt{RIPPER} \footnote{https://github.com/imoscovitz/wittgenstein} \cite{10.5555/3091622.3091637}. Additionally, we compare them with standard models such as logistic regression (LR) and random forest (RF) \footnote{We used scikit learn implementation \cite{scikit-learn} of logistic regression (LR) and random forest (RF).} to demonstrate the competitiveness of rule-set models.

\paragraph{Initial rules.} \texttt{GRA}, \texttt{GDY}, and \texttt{IDS} require a pre-mined set of rules as input. These rules are pre-mined from gradient-boosting decision tree ensembles that are fitted to the training data \cite{friedman2005predictive}. Each rule is derived from the decision tree ensembles by concatenating condition-value pairs from the root node to the leaf node, with the leaf node specifying the label. In our experiments, we employed the same set of rules as input for \texttt{GRA}, \texttt{GDY}, and \texttt{IDS}. DefragTrees  \cite{pmlr-v84-hara18a} and \texttt{RIPPER} \cite{10.5555/3091622.3091637} do not require the pre-mining rules. If a datapoint does not satisfy any of the rules in a \ruleset{}, it is assigned a default rule.

\paragraph{Hyperparameters.} For \texttt{GRA} and \texttt{GDY}, we sample $\alpha \in [0.60, 0.95], \beta \in [0.40, 0.80], \gamma \in [3, 12], \delta \in [3, 5], \lambda \in [1, 500]$. For \texttt{IDS}, we choose $\lambda_{i} \in [1, 500], i=[1,7]$. For \texttt{DefragTrees }, we chose the hyper-parameter $K_{max} \in [3,10]$. For \texttt{RIPPER}, we set the maximum-rules to $10$. For random forest (RF), we chose the number of estimators in the range [10, 500] and max-depth in the range [3, 7]. For logistic regression (LR), we considered l2 regularization with hyper-parameter $C$ in the range [0.01, 100].

\subsection{Numerical Results }\label{sec:numeric_res}

\subsubsection{Discriminative Power}

The disciminative power of \rulesets{} is quantified by the  \textit{Rule Set Accuracy (RSA)} metric, as defined in Section 3.1. The empirical evaluation reveals that \texttt{GRA} exhibits significantly higher discriminative power when compared to \texttt{GDY} and other baseline methods. \texttt{GRA} achieves an average RSA of 0.85, while \texttt{GDY} attains an average RSA of 0.84 (Table \ref{tab:summary_table}). This is notably higher than the average RSA values for \texttt{IDS}, Defrag and \texttt{RIPPER}, as presented in Table \ref{tab:summary_table}. The RSA for each dataset, along with dataset size and rule-count, can be found in Table \ref{tab:auc}. 

The results demonstrate that the \ruleset{} learned by \texttt{GRA} either matches or surpasses the performance of Random Forest(\texttt{RF}) in $7$ out of $12$ datasets and falls within a 3\% margin of \texttt{RF}'s RSA in the remaining five datasets.  \texttt{GRA} consistently outperforms other interpretable models, including  \texttt{IDS}, \texttt{RIPPER}, and DefragTrees. While \texttt{GDY}, achieves comparable performance to \texttt{RF} in 5 out of 12 datasets, it performs similar to or better than \texttt{IDS}, which is the next best algorithm. This clearly illustrates that \texttt{GRA} and \texttt{GDY} outperforms other interpretable models while achieving performance comparable to that of complex models like \texttt{RF}. 


\begin{table}[htbp]
\centering
\begin{tabular}{lccccc}
\hline
Datasets & GRA &  GDY & IDS & Defrag & RIPPER\\
\hline
Heart & \textbf{0.99} & 0.87 & 0.46 & 0.89 & 0.49 \\
Student &  \textbf{0.97}  & 0.94 & 0.22 & 0.90 & 0.06 \\
Breast & \textbf{1.00} &  0.99 & 0.38 & 0.95 & 0.39  \\
Thyroid & \textbf{0.99}  & 0.97 & 0.16 & 0.97 & 0.09\\
Abalone & \textbf{0.94} & 0.91 & 0.30 & 0.84 & 0.43  \\
Spam &  \textbf{0.96}  & 0.90 & 0.76 & 0.95 & 0.39\\
Wine & \textbf{0.95} & 0.71 & 0.31 & 0.83 & 0.48 \\
Magic & \textbf{0.99} & \textbf{0.99} & 0.21 & 0.93 & 0.40\\
Adult & \textbf{0.90} & 0.85 & 0.06 & 0.87 & 0.33 \\
Bank & \textbf{0.99} & \textbf{0.99} & 0.10 & 0.91 & 0.10\\
MCI Sc & \textbf{0.97} & 0.95  & 0.42 & 0.81 & 0.31\\
Dem Sc & \textbf{0.98} & 0.97 & 0.89 & 0.92 & 0.32 \\ 
\hline
\end{tabular}
\caption{Coverage-rate of \rulesets{} on test data-set. From Table, it is evident that GRA and GDY has atleast $20\%$ increase in coverage-rate compared to \texttt{IDS} and \texttt{RIPPER}.}
\label{tab:coverage}
\end{table}
\begin{table}
\centering
\begin{tabular}{lccccc}
\hline
Datasets & GRA &  GDY & IDS & Defrag & RIPPER  \\
\hline
Heart &  4\&7 & 4\&7 & 4\&7 & \textbf{4}\&\textbf{3} & 3\&6 \\

Student & 4\&8 & 4\&7 & 4\&8 & 5\&\textbf{5} & 5\&3\\

Breast & 4\&5 & \textbf{4}\&\textbf{3} & 4\&5 & 4\&4 & 5\&7   \\

Thyroid & \textbf{4}\&\textbf{3} &  \textbf{4}\&\textbf{3}  & 4\&4 & \textbf{4}\&\textbf{3} & 4\&3\\

Abalone & \textbf{4}\&\textbf{7} & \textbf{4}\&\textbf{7}  & \textbf{4}\&\textbf{7}  & 8\&6 & 4\&10 \\

Spam & \textbf{4}\&\textbf{6}  & \textbf{4}\&\textbf{6} & 4\&9 & 23\&7 & 4\&8\\

Wine & \textbf{4}\&\textbf{6} &  \textbf{4}\&\textbf{6} & \textbf{4}\&\textbf{6} & 10\&7 &  4\&9 \\

Magic & \textbf{4}\&\textbf{10} & 4\&12 & \textbf{4}\&\textbf{10} & 11\&7 & 5\&7\\

Adult & \textbf{4}\&\textbf{9} & \textbf{4}\&\textbf{9} & \textbf{4}\&\textbf{9} & 9\&9 & 5\&6  \\

Bank & \textbf{4}\&\textbf{8}  & \textbf{4}\&\textbf{8} & \textbf{4}\&\textbf{8} & 13\&9 & 5\&10 \\

MCI Sc & \textbf{4}\&\textbf{9}  & \textbf{4}\&\textbf{9}  & \textbf{4}\&\textbf{9} & 7\&9 & \textbf{4}\&6\\

Dem Sc & \textbf{4}\&\textbf{11} & \textbf{4}\&\textbf{11} &  \textbf{4}\&\textbf{11} & 9\&8 & 7\&5 \\
\hline
\end{tabular}
\caption{Maximum Rule-Width\&Rule size of \rulesets{} on test data-set. From Table, it is evident that \cdp{} has low value of maximum rule-width or low rule count or both.}
\label{tab:rule_width_size}
\end{table}
 
\subsubsection{Interpretability}

Inteepretability is determined by coverage and parsimony of \rulesets{}. As defined in Section 3.1, we have established metrics for evaluating the interpretability of \rulesets{}, including \textit{coverage-rate, maximum rule-width} and \textit{rule-size}. Table \ref{tab:coverage} provides the coverage-rate of our algorithms and baselines for different datasets. The results indicate that \texttt{GRA} exhibits a higher coverage-rate compared to state-of-the-art baselines. \texttt{GRA} consistently achieves a coverage-rate of more than 90\% across all datasets, with an average coverage-rate of 97\%. In contrast, \texttt{GDY} achieves an average coverage-rate of 91\% (Table \ref{tab:summary_table}). \texttt{IDS}, which ranks as the next best baseline in terms of discriminative power, lags behind with an average coverage-rate of 35\% (Table \ref{tab:summary_table}).

The parsimony of \rulesets{} is gauged by maximum rule-width and rule-size, as outlined in Table \ref{tab:rule_width_size}. When examining the maximum rule-width and rule-size, it becomes apparent that \texttt{GRA} and \texttt{GDY}'s values are comparable to those of \texttt{IDS} and \texttt{RIPPER}.  
These algorithms have substantially lower maximum rule widths when compared to DefragTrees from Table \ref{tab:rule_width_size}. On average, the maximum rule-width of DefragTrees is more than twice that of  \texttt{GRA} and \texttt{GDY} (Table \ref{tab:summary_table}), negatively impacting interpretability.

\begin{table}[]
    \centering
    \begin{tabular}{lccccc}
    \hline
Metric & GRA & GDY & IDS & Defrag & RIPPER \\
    \hline
RSA &  \textbf{0.85} & 0.84 & 0.82 & 0.76 & 0.80 \\
Coverage & \textbf{0.97} & 0.91 & 0.35 & 0.90 & 0.31\\
Rule-width & \textbf{4} & \textbf{4} & \textbf{4} & 10 & 5\\
Rule-size & 7 & 7 & 8 & \textbf{6} & 7\\
\hline 
    \end{tabular}
    \caption{Summary of the results obtained by averaging across the twelve datasets : RSA (Table \ref{tab:auc}), Coverage-rate (Table \ref{tab:coverage}), Maximum Rule Width and Rule size (Table \ref{tab:rule_width_size}). \texttt{GRA} performs better than baselines in most of the metrics.}
    \label{tab:summary_table}
\end{table}

\subsection{Case Study : Designing Neurocognitive Tests through Ruleset Learning}
In this case study, we attempt to design Neurocognitive Tests as an application of \ruleset{} learning. Designing medical diagnostic tests using \rulesets{} has not been studied in the literature. Manual design of tests is time intensive and may fail to be fully exhaustive, potentially excluding important relationships. There are some previous attempts \cite{brarati} to reduce the overall number of Neurocognitive tests. 
It is important to design tests to diagnose various diseases such as Alzheimer’s Disease. Alzheimer’s Disease is characterized by initial memory impairment and cognitive decline and affects over 30 million people worldwide, with a high social burden for patients and caregivers. Neurocognitive tests are regarded as the most accurate indication of early impairment.   

Neurocognitive tests comprise a set of tasks performed by a subject, designed to assess the subject's state in the cognitive domains of orientation, memory, attention, visuospatial, language, and executive function.  Various attributes (such as time taken, error rate, etc.) are extracted for each task. Each attribute corresponds primarily to a single cognitive domain and is considered a feature in the classifier. 
A collection of features, together with its limiting values, corresponds to a rule. The duration of the testing is directly proportional to the number of tests conducted on the patient and the number of attributes.  

Since Neurocognitive tests are long and tedious, having a rule-based system is helpful, where the entire battery of tests need not be administered to every subject. Hence, to reduce the testing duration while improving its effectiveness, it is essential to obtain the minimum set of rules covering all the cognitive domains yielding as high an accuracy as possible. We believe that \cdp{} learning is a good fit for designing accurate and interpretable diagnostis tests and can generate efficient Neurocognitive tests in our case study.

\paragraph{Dataset and Methods} The study population comprised 145k  subjects   from the NACC Uniform Data Set \cite{Weintraub2009,nacc2007,f6d19c3fa05d461fbf210ecf49fc3c43} from approximately 32 Alzheimer’s Disease Research Centers (ADRCs) between 2005 and 2019. Each record consists of 69 attributes from 11 chosen standardized neuropsychological tests. 
The details of tests are in supplementary material. Clinical assessments grade the disease into categories measured by Clinical Dementia Rating (CDR®), which ranges from a value of 0 for cognitively normal individuals, 0.5 for Mild Cognitive Impairment (MCI), a prodrome of Alzheimer’s to 1, and higher values which correspond to  Alzheimer’s Disease of various grades.

We have proposed two classifiers in tandem for diagnosing Alzheimer’s Disease -  \textit{Dementia Screener (Dem.Sc.)} and \textit{MCI Screener (MCI.Sc.)}. 
\textit{Dementia Screener} segregates a subject as definitively suffering from Dementia (CDR = 1, 2, 3) or not (CDR = 0, 0.5). The \textit{MCI Screener} distinguishes between subjects with CDR = 0 and CDR = 0.5 (Normal vs. Mild Cognitive Impairment (MCI)). In subjects suspected of cognitive impairment, the state of Mild Cognitive Impairment (CDR = 0.5) is the most difficult to classify \cite{brarati}, since they might fall into either of CDR = 0 or CDR = 1 category. Hence to diagnose Alzheimer’s Disease, we first pass a subject through the Dementia Screener to screen if the subject is definitively suffering from Dementia or not. If the subject is not definitively suffering from Dementia, we pass the subject through MCI Screener to diagnose between Normal and MCI. The clinical interventions for the Dementia and MCI cases could be different.

\begin{table*}[t]
\centering
\begin{tabular}{lrrlllllll}
\hline
Datasets & \#records & \#rules & GRA &  GDY & IDS  & DT & LIBRE & LR & RF \\
\hline
MCI Sc & 108399 & 397 & \textbf{0.74} &  \textbf{0.74} & 0.72 & 0.70 & 0.51 & 0.72 & \textbf{0.74} \\
Dementia Sc & 145028 & 318 & \textbf{0.87} & 0.86 & 0.82 &  0.66 & 0.51 & 0.83 & \textbf{0.87} \\
\hline
\end{tabular}
\caption{Rule Set Accuracy(RSA) of rule sets  generated by different algorithms on Alzheimer’s Disease test data-set. From the Table, it is evident that GRA has the highest accuracy when compared with rule set based baselines}
\label{tab:ad_auc}
\vspace{-0.25cm}
\end{table*} 



\paragraph{Results and Discussion}
Table \ref{tab:ad_auc} reports RSA of \rulesets{} learned by \texttt{GRA}, \texttt{GDY}, and baselines for Dementia Screener and MCI Screener. The RSA of the \ruleset{} learned by \texttt{GRA} is higher than that baselines and is the same as that of RF (Table \ref{tab:ad_auc}). 
We consider \rulesets{} learned by algorithms for Dementia and MCI Screener for which the coverage-rate is atleast 70\% . Although \texttt{IDS} has reasonable accuracy, the coverage-rate is low. Among the algorithms considered, \texttt{GRA}, \texttt{GDY} and \texttt{DT} has more than 70\% coverage-rate. 
From the results, \texttt{GRA} and \texttt{GDY} exhibits atleast 18\% reduction in number of features compared to \texttt{DT}. Though \texttt{GDY} is not as accurate compared to \texttt{GRA}, it is a good candidate for designing Neurocognitive tests. \texttt{GDY} has time complexity nearly linear in number of rules in $B_{\alpha,\delta}$.

It is evident from the results that \texttt{GRA} learn \ruleset{} for Dementia Screener and MCI Screener with high coverage-rate, discriminative power and parsimony. To our best knowledge, this work is the first attempt to make use of the fact that early onset of Alzheimer's (consisting of the transition between Normal and MCI individuals) follows different rules from later progression (CDR = 1 and onwards) to arrive at a reduced number of tests. Dementia Screener can help quickly detect cases that are not yet fully afflicted with the disease. The MCI Screener can focus on the rules for people not yet afflicted with the disease, thereby reducing the overall testing time. This will reduce the cognitive stress of the individuals undergoing the cognitive tests for Alzheimer's disease and make the screening process efficient. Clinical validation of the \rulesets{} will be seperately conducted.


\section{CONCLUSION}
We believe that developing mechanisms to learn accurate and interpretable rule-based models will improve the adoption of machine learning models in critical applications, where the model is liable to justify the decision to the stakeholders. 
This paper presents a novel approach for learning \rulesets{} that are accurate and interpretable by proposing   \cdp{}. We have developed two algorithms for learning \cdp{}, the \texttt{GRA}, and the \texttt{GDY}, and obtained theoretical guarantees for both approaches. The empirical results show that our techniques perform better than existing rule learning algorithms. There are multiple exciting lines of future work to explore. It would be interesting to develop an objective function to learn CDPRL, where it is possible to relax at least one of the constraints. Consequently, it will be interesting to develop a soft-version of overlap-graph for the GRA.


%
%
%
\bibliographystyle{splncs04}
\bibliography{references}





\end{document}